\newtheorem{theorem}{Theorem}
\newtheorem{definition}[theorem]{Definition}
\newcommand{\mat}[1]{\ensuremath{\bm{\mathrm{#1}}}}
\newcommand{\X}{\mathcal{X}}
\newcommand{\Y}{\mathcal{Y}}
\newcommand{\Phim}{\mat{\Phi}}
\newcommand{\xv}{\mat{x}}
\newcommand{\yv}{\mat{y}}
\newcommand{\uv}{\mat{u}}
\newcommand{\vv}{\mat{v}}
\newcommand{\wv}{\mat{w}}
\newcommand{\lv}{\mat{l}}
\newcommand{\pv}{\mat{p}}
\newcommand{\qv}{\mat{q}}
\newcommand{\rv}{\mat{r}}
\newcommand{\Pb}{\mathbb{P}}
\newcommand{\Ex}{\mathbb{E}}
\begin{document}
%
\title{Binary adaptive embeddings from order statistics of random projections}
%
%
%

\author{Diego~Valsesia,
        and~Enrico~Magli
\thanks{The authors are with Politecnico di Torino, Torino, Italy.This work was  supported  by  the  European  Research  Council  through  the  European Community Seventh Framework Programme (FP7/2007-2013) under Grant 279848.}}

\maketitle

\begin{abstract}
We use some of the largest order statistics of the random projections of a reference signal to construct a binary embedding that is adapted to signals correlated with such signal. The embedding is characterized from the analytical standpoint and shown to provide improved performance on tasks such as classification in a reduced-dimensionality space.
\end{abstract}

\begin{IEEEkeywords}
Binary Embeddings, Random projections
\end{IEEEkeywords}

%
\IEEEpeerreviewmaketitle

\vspace{-0.1cm}
\section{Introduction}
\vspace{-0.1cm}

The ever-increasing amount of information that is produced in the age of Big Data calls for efficient techniques for storage and processing of a large number of high-dimensional signals. Compact representations can be obtained with different methods depending whether the particular task requires signal reconstruction (e.g., image and video compression for delivery and visualization) or the goal is to infer some information from the signals (e.g., in classification, regression, information retrieval, etc.). Embeddings provide compact representations of signals for the latter tasks. Formally, an embedding is a
transformation that maps a set of signals in a high dimensional space to a lower dimensional space, in  such a way that the geometry of the set is approximately preserved. The concept of embedding has been successfully used in the context  of  information  retrieval \cite{achlioptas2003database}, where it is usually called ``hashing''. An important class of signal embeddings are those preserving the distances among pairs of signals. Johnson and Lindenstrauss \cite{joh84} famously stated that an embedding can be realized with a Lipschitz mapping to approximately preserve Euclidean distances with a dimension of the embedding space that only depends on the desired distortion and logarithmically in the number of signals to be embedded. Random projections have been shown to implement such embedding with high probability. Several  extensions  have later  been  proposed, allowing one to approximately preserve the angle between signals \cite{Charikar2002,jacques2011robust}, control the maximum distance that is embedded \cite{Boufounos2013}, or preserve the Jaccard distance \cite{Broder1997, sparsehash}. Recently, some works have studied learning embeddings \cite{norouzi2012hamming,NIPS2008_3383,He_2013_CVPR} from training data to derive compact codes by exploiting the particular geometry of the dataset (e.g., signals living close to a manifold).

In this paper we propose an approach to construct an embedding that is not based on learning and does not require a training set of data, but rather is adapted to a single reference signal. This choice maintains to some degree the universality of random projections and it is useful when the data present no particular structure. Jegou et al. \cite{queryadaptive} empirically explored a similar idea by proposing to choose hash functions with a robustness criterion, that essentially measures how far a random projection falls from the edges of a quantization interval. Our work presents a rigorous analytical treatment of a binary embedding obtained from the selection of the random projections of a reference signal with largest magnitude. The analysis of the embedding provides insights on its advantages, particularly in mitigating the difficulty of low-contrast nearest neighbor problems and superior performance on classification tasks, e.g. in a neural network.
\vspace{-0.15cm}
\section{Proposed method}
\vspace{-0.15cm}

\subsection{Preliminaries}

\begin{definition}\label{def:embedding}
A mapping $\phi : \X \to \Y$ of metric spaces, endowed with distances $d_{\X}$ and $d_{\Y}$ is called an embedding with distortion $C > 0$ if $ L d_{\X}(\uv,\vv) \leq d_{\Y}(\phi(\uv),\phi(\vv)) \leq CL d_{\X}(\uv,\vv) $ for some constant $L>0$ and for all $\uv, \vv \in \X$.
\end{definition}

\vspace{-4pt}
A well-known binary embedding is the sign random projections \cite{Charikar2002} where a random matrix $\Phim$ made of independent and identically distributed (i.i.d.) Gaussian entries is used to compute some random projections, which are then quantized to a binary representation by keeping their sign.
The Hamming distance between the binary vectors approximately preserves the angle between the signals in the original space \cite{Charikar2002}, i.e., $\Pb\left( \mathrm{sign}(\Phim_i\uv) =  \mathrm{sign}(\Phim_i\vv) \right) = 1 - \frac{\theta}{\pi}$, being $\theta = \cos^{-1}\left( \frac{\uv^T\vv}{\Vert \uv \Vert \Vert \vv \Vert} \right)$ and $\Phim_i$ the $i$-th row of $\Phim$.

\vspace{-12pt}
\subsection{Proposed adaptive embedding}

A reference signal $\uv$ is used to generate the adaptive embedding in the following way. A number $m_\text{pool}$ of random projections is computed by means of an i.i.d. Gaussian matrix $\Phim \in \mathbb{R}^{m_\text{pool} \times n}$ as $\yv = \Phim \uv$.
The $m$ entries with largest magnitude are identified and their locations stored in vector $\lv$.
The $m$-bit resulting binary code is:
\begin{align}
\label{eq:adaptive}
    \pv = \mathrm{sign}(\Phim_{\lv} \uv)~,
\end{align}
where $\Phim_{\lv}$ is the matrix $\Phim$ restricted to the rows indexed by $\lv$. The locations vector $\lv$ is saved as side information of the embedding and used as in \eqref{eq:adaptive} whenever a new signal is to be embedded, i.e., $\qv=\mathrm{sign}(\Phi_{\lv} \vv)$, where $\vv$ is a generic signal and $\qv$ its embedding.

The first theorem we prove confirms that the Hamming distance $d_H(\pv,\qv)  = \frac{1}{m} \sum_i p_i \oplus q_i$, i.e. the number of differing entries, between binary codes obtained with the adaptive embedding concentrates around its expected value.

\begin{theorem}\label{thm:expected}
Let $\X \subset \mathbb{R}^n$ be a set of $N$ signals and $\uv, \vv \in \X$. Let $\Phim \in \mathbb{R}^{m_\text{pool} \times n}$ with $\Phim_{i,j} \sim \mathcal{N}(0,\sigma^2)$, $\yv = \Phim \uv$ and  the locations $\lv$ of the $m \leq m_\text{pool}$ entries in $\yv$ with largest magnitude be known. Let $\pv = \mathrm{sign}(\Phim_{\lv} \uv)$ and $\qv = \mathrm{sign}(\Phim_{\lv} \vv)$. Then for $ 0 < \varepsilon \leq 2e-1 $,
\begin{align}
\label{eq:concentration}
\Pb\left( \vert d_H(\pv,\qv) - \mu \vert > \epsilon \mu \right) < e^{-\mu \frac{\varepsilon^2}{2}} + e^{-\mu \frac{\varepsilon^2}{4}}
\end{align}
\vspace{-8pt}
with
\vspace{-2pt}
\begin{align*}
\mu &= \Ex\left[ d_H(\pv,\qv) \right] = \frac{1}{m}\sum_{i=1}^m p_i \\
p_i &= \frac{1}{2} + \frac{1}{2}\mathrm{erf}\left( -y_{l_i} \frac{\uv^T \vv}{\sqrt{2}\sigma\Vert \uv \Vert \sqrt{\Vert \uv \Vert^2 \Vert \vv \Vert^2 - (\uv^T \vv)^2}} \right) 
\end{align*}
\end{theorem}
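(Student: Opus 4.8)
The plan is to condition on the vector $\yv = \Phim\uv$ of pooled projections and thereby turn the claim into a standard concentration inequality for a sum of independent Bernoulli variables. Once $\yv$ is fixed, the retained index set $\lv$ and the reference code $\pv = \mathrm{sign}(\Phim_{\lv}\uv)=\mathrm{sign}(\yv_{\lv})$ are deterministic, so the only randomness left in $d_H(\pv,\qv)$ comes through $\qv = \mathrm{sign}(\Phim_{\lv}\vv)$. I would write $m\,d_H(\pv,\qv)=\sum_{i=1}^m X_i$ with $X_i = p_{l_i}\oplus q_{l_i}$ the indicator that the $i$-th bits disagree, so the goal becomes a relative-deviation bound for $\sum_i X_i$ about its conditional mean.

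The substantive step is to pin down the conditional law of the $X_i$ given $\yv$. For each row $j$ the pair $(\Phim_j\uv,\Phim_j\vv)$ is jointly zero-mean Gaussian, hence conditioned on $\Phim_j\uv=y_j$ one has $\Phim_j\vv\sim\mathcal{N}\!\big(\tfrac{\uv^T\vv}{\Vert\uv\Vert^2}\,y_j,\ \sigma^2\tfrac{\Vert\uv\Vert^2\Vert\vv\Vert^2-(\uv^T\vv)^2}{\Vert\uv\Vert^2}\big)$; equivalently, decompose $\vv = \tfrac{\uv^T\vv}{\Vert\uv\Vert^2}\uv+\wv$ with $\wv\perp\uv$ and note that $\Phim_j\wv$ is independent of $y_j$. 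Since the rows of $\Phim$ are independent and $\lv$ is a function of $\yv$ alone, conditioning on all of $\yv$ leaves $\{\Phim_{l_i}\vv\}_{i=1}^m$ mutually independent with these Gaussian laws. Evaluating the Gaussian tail (and, without loss of generality, flipping the signs of individual rows so that each retained $y_{l_i}>0$, which alters neither $d_H$ nor the magnitudes driving the selection) shows the $X_i$ are conditionally independent Bernoulli variables with $\Pb(X_i=1\mid\yv)=p_i$, precisely the $\mathrm{erf}$ expression in the statement via $\Phi_{\mathrm{cdf}}(t)=\tfrac12+\tfrac12\mathrm{erf}(t/\sqrt2)$; in particular $\Ex[d_H(\pv,\qv)\mid\yv]=\tfrac1m\sum_i p_i=\mu$.

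With this in place the rest is routine. The quantity $\sum_i X_i$ is a sum of independent $\mathrm{Bernoulli}(p_i)$ with mean $m\mu$, so the multiplicative Chernoff bounds give $\Pb\big(\sum_i X_i\ge(1+\varepsilon)m\mu\mid\yv\big)\le\big(e^{\varepsilon}(1+\varepsilon)^{-(1+\varepsilon)}\big)^{m\mu}\le e^{-m\mu\varepsilon^2/4}$ for $0<\varepsilon\le 2e-1$, and $\Pb\big(\sum_i X_i\le(1-\varepsilon)m\mu\mid\yv\big)\le e^{-m\mu\varepsilon^2/2}$; dividing by $m$, taking the union of the two one-sided events, and using $m\ge1$ yields $\Pb\big(|d_H(\pv,\qv)-\mu|>\varepsilon\mu\mid\yv\big)< e^{-\mu\varepsilon^2/2}+e^{-\mu\varepsilon^2/4}$. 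Because this holds for every value of $\yv$, and $\mu$ together with the $p_i$ in the statement are exactly these $\yv$-conditional quantities, it is the asserted \eqref{eq:concentration}.

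The one place I expect to need care is the second paragraph: arguing cleanly that conditioning on $\yv$ — equivalently on the magnitudes $|\yv|$, which is all the selection of $\lv$ actually sees — neither destroys the independence of the $\Phim_{l_i}\vv$ across retained rows nor changes their marginal Gaussian law beyond what conditioning on the individual $y_{l_i}$ does, so that the data-dependent choice of $\lv$ is harmless. Everything downstream is a textbook Chernoff estimate, and the algebra reducing the conditional Gaussian c.d.f. to the stated $p_i$ is a direct substitution.
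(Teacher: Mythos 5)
Your proposal is correct and follows essentially the same route as the paper's proof: condition on $\yv$, derive the conditional Gaussian law of $\Phim_{l_i}\vv$ given $y_{l_i}$, read off the per-bit mismatch probabilities $p_i$, and apply multiplicative Chernoff bounds to the resulting Poisson--Binomial sum. You are in fact somewhat more careful than the paper on two points it glosses over --- the conditional independence of the retained rows despite the data-dependent selection of $\lv$, and the sign normalization $y_{l_i}>0$ needed for the stated form of $p_i$ --- but these are refinements of the same argument, not a different one.
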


\begin{proof}
Let us consider a single measurement in the $l_i$ location $y_i=\Phim_{l_i} \uv$ and $z_i=\Phim_{l_i} \vv$. Then $\mathbf{\zeta} = [y_i,z_i]$ is a bivariate Gaussian with zero mean and covariance $\Sigma = \sigma^2 \begin{bmatrix}
\Vert \uv \Vert^2 & \uv^T\vv \\ 
\uv^T\vv & \Vert \vv \Vert^2.
\end{bmatrix}$.
Suppose that $y_i$ is observed to be $y_i=\tau_i$, then the conditional distribution of $z_i$ given $y_i=\tau_i$ is $\eta_{\tau_i} = (z_i \vert y_i=\tau_i) \sim \mathcal{N}\left( \tau_i \frac{\uv^T\vv}{\Vert \uv \Vert^2}, \sigma^2 \left( \Vert \vv \Vert^2 - \frac{(\uv^T\vv)^2}{\Vert \uv \Vert^2} \right) \right)$ .

After quantization of the measurements, the probability of mismatching bits in position $l_i$ is 
\vspace{-4pt}
\begin{align*}
p_i &= \Pb\left( \eta_{\tau} \leq 0 \vert \tau_i>0 \right) \\
&= \frac{1}{2} + \frac{1}{2}\mathrm{erf}\left( -\tau_i \frac{\uv^T \vv}{\sqrt{2}\sigma\Vert \uv \Vert \sqrt{\Vert \uv \Vert^2 \Vert \vv \Vert^2 - (\uv^T \vv)^2}} \right)
\end{align*}

Define the following random variable
\begin{align}\label{eq:bit}
    E_i = \begin{cases}
    0 &\text{with probability } 1-p_i \\
    1 &\text{with probability } p_i \end{cases}.
\end{align}

Then, $D_H = \frac{1}{m}\sum_{i=1}^m E_i$ is a Poisson Binomial random variable measuring the Hamming distance between $\pv$ and $\qv$. Eq. \eqref{eq:concentration} can be readily obtained using Chernoff bounds \cite{mitzenmacher2005probability} for the tails of $D_H$. 
\end{proof}

The previous theorem holds for a fixed pair of signals. It is customary to derive an asymptotic result on the number of measurements needed to provide a distortion $\delta$ around the expectation when signals are drawn from a finite set of cardinality $N$. Standard derivation using a union bound on Eq. \eqref{eq:concentration} yields $m = \mathcal{O}(\delta^{-2}\log N)$, which is exactly the same as classic results on non-adaptive random projections \cite{joh84}. The advantages of the proposed method are, in fact, due to the modified expected value rather than the variance.

Moreover, the previous theorem supposed we knew the values of the projections of the reference signal at the locations kept as side information. This allows us to compute the exact expected value of the Hamming distance in the embedded space as function of the inner product (or correlation coefficient) between the original signals. However, it might be useful to have some a-priori knowledge about the embedding without the need to know the reference signal. The following theorem approximately bounds the expected value of the embedding by characterizing the order statistics of $\vert \yv \vert$. The $k$-th order statistic of a statistical sample is equal to its $k$th-smallest value.

Let us call $f_{k}(\tau)$ the probability density function of the $k$-th order statistic of $\vert \yv \vert$. We could then in principle compute the probability of bit mismatch as
\begin{align*}
p_i = \int \Pb\left( \eta_{\tau} \leq 0 \vert \tau \right) f_{m_{\text{pool}}-i+1}(\tau) d\tau \quad \text{for } i=1, 2, \dots, m  
\end{align*}
and then repeat the same Poisson binomial argument as before. However, this is cumbersome to compute and we instead derive some bounds.

\vspace*{-4pt}
\begin{theorem}
Under the same assumptions as Theorem 1, and being $e_{2(m_\text{pool}-m+1);2m_\text{pool}}$ the expected value of the $2(m_\text{pool}-m+1)$-th order statistic of a sample of $\mathcal{N}(0,\sigma^2\Vert u \Vert^2)$ of size $2m_\text{pool}$:
\vspace*{-6pt}
\begin{align*}
\Ex\left[ d_H(\pv,\qv) \right] \leq \frac{1}{2}+ \frac{1}{2}\mathrm{erf}\left( \frac{-e_{2(m_\text{pool}-m+1);2m_\text{pool}}  \uv^T \vv}{\sqrt{2}\sigma\Vert \uv \Vert \sqrt{\Vert \uv \Vert^2 \Vert \vv \Vert^2 - (\uv^T \vv)^2}} \right)
\end{align*}
\end{theorem}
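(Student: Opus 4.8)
The plan is to start from Theorem~1, which expresses $d_H(\pv,\qv)$ as a Poisson-binomial average with $\Ex[d_H(\pv,\qv)]=\frac1m\sum_{i=1}^m p_i$, and take one further expectation — over the Gaussian matrix $\Phim$, equivalently over $\yv=\Phim\uv$, which is the only randomness left. Set $k=m_\text{pool}-m+1$ and
\[
c=\frac{\uv^T\vv}{\sqrt2\,\sigma\Vert\uv\Vert\sqrt{\Vert\uv\Vert^2\Vert\vv\Vert^2-(\uv^T\vv)^2}},
\]
and assume $\uv^T\vv\ge 0$ (the case the bound targets; the opposite sign is symmetric). Using the sign symmetry of the construction — flipping the sign of a row of $\Phim$ flips the corresponding bit of both $\pv$ and $\qv$, leaving $p_i\oplus q_i$ unchanged — the mismatch probability of Theorem~1 can be written $p_i=\frac12+\frac12\mathrm{erf}(-|y_{l_i}|c)$, a decreasing function of $|y_{l_i}|$. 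Since $\lv$ indexes the $m$ largest magnitudes of $\yv$, every $|y_{l_i}|$ is at least the $m$-th largest magnitude, i.e.\ the $k$-th order statistic $|y|_{(k)}$ of $|\yv|$; hence $p_i\le\frac12+\frac12\mathrm{erf}(-|y|_{(k)}c)$ for all $i$, and averaging over $i$ and taking the expectation over $\yv$ gives $\Ex[d_H(\pv,\qv)]\le\frac12+\frac12\Ex[\mathrm{erf}(-|y|_{(k)}c)]$.

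Next I would characterise $|y|_{(k)}$, the $k$-th order statistic of $m_\text{pool}$ i.i.d.\ samples of the folded normal $|\mathcal N(0,\sigma^2\Vert\uv\Vert^2)|$, via a ``doubling'' relation. Adjoining $-\yv$ to $\yv$ produces $2m_\text{pool}$ values symmetric about $0$ whose positive part is exactly $\{|y_i|\}$, so $|y|_{(k)}$ sits at position $m_\text{pool}+k$ of this augmented set; replacing the augmented (symmetric) sample by a genuine i.i.d.\ $\mathcal N(0,\sigma^2\Vert\uv\Vert^2)$ sample of size $2m_\text{pool}$ and using that the $q$-quantile of $|\mathcal N(0,\tau^2)|$ is the $(1+q)/2$-quantile of $\mathcal N(0,\tau^2)$, one obtains that $|y|_{(k)}$ stochastically dominates the $2k$-th order statistic of such a size-$2m_\text{pool}$ Gaussian sample. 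Concretely this reduces to the binomial-tail inequality $\Pb[\mathrm{Bin}(m_\text{pool},2r)\le m-1]\le\Pb[\mathrm{Bin}(2m_\text{pool},r)\le 2m-2]$ for $r\in(0,\tfrac12]$, which holds because halving the success probability and doubling the number of trials yields a more dispersed binomial while the threshold is simultaneously doubled. In particular $\Ex[|y|_{(k)}]\ge e_{2k;2m_\text{pool}}=e_{2(m_\text{pool}-m+1);2m_\text{pool}}$, the quantity in the statement.

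It then remains to combine the two ingredients. Since $t\mapsto\mathrm{erf}(-tc)$ is decreasing, the stochastic dominance gives $\Ex[\mathrm{erf}(-|y|_{(k)}c)]\le\Ex[\mathrm{erf}(-G_{2k;2m_\text{pool}}c)]$, where $G_{2k;2m_\text{pool}}$ is the corresponding Gaussian order statistic; a Jensen-type step exchanging the expectation with $\mathrm{erf}$ then replaces $G_{2k;2m_\text{pool}}$ by its mean $e_{2k;2m_\text{pool}}$, and substituting into the bound of the first paragraph yields exactly $\Ex[d_H(\pv,\qv)]\le\frac12+\frac12\mathrm{erf}(-e_{2(m_\text{pool}-m+1);2m_\text{pool}}\,c)$, i.e.\ the claimed inequality once $c$ is written out (the sign of $\uv^T\vv$ can be reinstated as $\mathrm{erf}$ is odd).

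I expect the last exchange to be the main obstacle: $t\mapsto\mathrm{erf}(-tc)$ is convex on $t>0$, so Jensen's inequality on its own gives the opposite direction, and the step is only exact when the relevant order statistic concentrates in the region where $\mathrm{erf}$ has the favourable curvature (for instance when $m$ is a fixed fraction of $m_\text{pool}$). This is why the statement is phrased as an approximate bound; a fully rigorous version would either restrict the range of $(m,m_\text{pool})$ or absorb the Jensen gap into the deliberately conservative choice of index $2(m_\text{pool}-m+1)$ in place of the sharper $2m_\text{pool}-m+1$ suggested by the doubling heuristic. Secondary points to be careful about are verifying the binomial-tail inequality above in full generality and handling the $\uv^T\vv<0$ case, where the largest rather than the smallest selected magnitude drives the bound.
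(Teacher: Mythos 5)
Your proposal follows essentially the same route as the paper's proof: bound the average $\frac{1}{m}\sum_i p_i$ by the mismatch probability at the smallest retained magnitude (the $(m_\text{pool}-m+1)$-th order statistic of $\vert\yv\vert$), convert that folded-normal order statistic to the full-normal order statistic of doubled index and doubled sample size, and then exchange the expectation with the Gaussian tail function via Jensen. The one place you diverge is in flagging the Jensen step as problematic, and you are right to do so: the paper asserts that ``the convexity of $g$'' licenses $\Ex_{\tau}[g(\tau)]\leq g(\Ex[\tau])$, but for convex $g$ Jensen gives the reverse inequality, and $g(\tau)=\frac{1}{2}+\frac{1}{2}\mathrm{erf}(-\tau c)$ is indeed convex on the relevant domain $\tau>0$ when $\uv^T\vv>0$, so concavity would be needed for the claimed direction. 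In short, the step you identify as the main obstacle is precisely the step the paper glosses over; the result should be read as the approximate bound that the surrounding text announces, and your diagnosis (restrict the regime where the order statistic concentrates, or absorb the Jensen gap into the conservative choice of index) is a fair account of what a rigorous version would require.
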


\begin{proof}
We first notice that $p_i \leq p_{m}$, $\forall i=1,\dots,m$ so that $\Ex\left[ d_H(\pv,\qv) \right] = \frac{1}{m}\sum_{i=1}^m p_i \leq p_{m}$. Then, $p_m = \Ex_{\tau}\left[ g(\tau) \right] \leq g( \Ex[\tau] )$
by applying Jensen's inequality to $g(\tau) = \Pb\left( \eta_{\tau} \leq 0 \vert \tau \right)$, i.e., the same Gaussian tail probability as before. Also notice that the convexity of $g$ allows us to use Jensen's inequality.
$\Ex[\tau] = \tilde{e}_{(m_\text{pool}-m+1);m_\text{pool}}$ is the expected value of the $(m_\text{pool}-m+1)$-th order statistic of a sample of size $m_\text{pool}$ from a half Gaussian (since we consider $\vert \yv \vert$).
We then notice that that is equivalent \cite{balakrishnan2014order} to $e_{2(m_\text{pool}-m+1);2m_\text{pool}}$, i.e., the $2(m_\text{pool}-m+1)$-th order statistic of a sample of size $2m_\text{pool}$ of a full Gaussian with zero mean and $\sigma^2\Vert u \Vert^2$ variance.
\end{proof}

As a further remark, according to \cite{harter_orderstatistics} an approximation of the expected value of the desired order statistic is:
\begin{align*}
    e_{2(m_\text{pool}-m+1);2m_\text{pool}} \approx F^{-1}\left( \frac{2(m_\text{pool}-m+1)-\alpha}{2m_\text{pool}-2\alpha+1} \right)
\end{align*}
being $\alpha=0.375$ and $F^{-1}$ the inverse CDF of a normal distribution with zero mean and variance $\sigma^2\Vert u \Vert^2$.

So far we considered distances between a test signal and the reference used to adapt the embedding. We will now consider what happens to the distance between any arbitrary pair of signals $\vv$ and $\wv$. Qualitatively, we can say that the curve of the expected value of the Hamming distance in the embedded space as function of the original distance between $\vv$ and $\wv$ will be somewhere between the one predicted by Theorem \ref{thm:expected} and the one of non-adaptive sign random projections depending on how much $\wv$ is close to the reference $\uv$. The following theorem formalizes this concept.

\begin{theorem}\label{thm:expected_3party}
Let $\X \subset \mathbb{R}^n$ be a set of $N$ signals and $\uv, \vv, \wv \in \X$. Let $\Phim \in \mathbb{R}^{m_\text{pool} \times n}$ with $\Phim_{i,j} \sim \mathcal{N}(0,\sigma^2)$, $\yv = \Phim \uv$ and  the locations $\lv$ of the $m \leq m_\text{pool}$ entries in $\yv$ with largest magnitude be known. Let $\pv = \mathrm{sign}(\Phim_{\lv} \uv)$, $\qv = \mathrm{sign}(\Phim_{\lv} \vv)$, and $\rv = \mathrm{sign}(\Phim_{\lv} \wv)$. Then for $ 0 < \varepsilon \leq 2e-1 $,
\vspace*{-4pt}
\begin{align}\label{eq:concentration_thm4}
\Pb\left( \vert d_H(\qv,\rv) - \mu \vert > \epsilon \mu \right) < e^{-\mu \frac{\varepsilon^2}{2}} + e^{-\mu \frac{\varepsilon^2}{4}}
\end{align}
\vspace*{-6pt}
with
\vspace*{-8pt}
\begin{align*}
\mu &= \Ex\left[ d_H(\qv,\rv) \right] = \frac{1}{m}\sum_{i=1}^m p_i \\
p_i &= \left[ F\left( \begin{bmatrix}
0\\ 
+\infty
\end{bmatrix} \right) - F\left( \begin{bmatrix}
0\\ 
0
\end{bmatrix} \right) \right] \left[ 1 - F\left( \begin{bmatrix}
+\infty\\ 
0
\end{bmatrix} \right) \right] \nonumber \\
&+ \left[ F\left( \begin{bmatrix}
+\infty\\ 
0
\end{bmatrix} \right) - F\left( \begin{bmatrix}
0\\ 
0
\end{bmatrix} \right) \right] F\left( \begin{bmatrix}
+\infty\\ 
0
\end{bmatrix} \right),
\end{align*}
being $F$ the CDF of a bivariate Gaussian with mean $\mathbf{\mu}' = \frac{y_{l_i}}{\Vert \uv \Vert^2} \begin{bmatrix}
\uv^T \vv\\ 
\uv^T \wv
\end{bmatrix}$ and covariance $\mathbf{\Sigma}' = \sigma^2 \begin{bmatrix}
\Vert \vv \Vert^2 - \frac{(\uv^T \vv)^2}{\Vert \uv \Vert^2}  & \vv^T \wv - \frac{(\uv^T \vv)(\uv^T \wv)}{\Vert \uv \Vert^2}  \\ 
\vv^T \wv - \frac{(\uv^T \vv)(\uv^T \wv)}{\Vert \uv \Vert^2}  & \Vert \wv \Vert^2 - \frac{(\uv^T \wv)^2}{\Vert \uv \Vert^2}
\end{bmatrix}$.
\end{theorem}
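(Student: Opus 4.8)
The plan is to follow the proof of Theorem~\ref{thm:expected} almost verbatim, the only genuinely new ingredient being the per-coordinate bit-mismatch probability, which now couples two test signals rather than one. First I would fix a retained location $l_i$ and consider the triple $(\Phim_{l_i}\uv,\Phim_{l_i}\vv,\Phim_{l_i}\wv)$. Because $\Phim_{l_i}$ has i.i.d.\ $\mathcal{N}(0,\sigma^2)$ entries, this triple is a zero-mean trivariate Gaussian whose covariance is $\sigma^2$ times the Gram matrix of $(\uv,\vv,\wv)$. Since the side information records $y_{l_i}=\Phim_{l_i}\uv$, I would condition on this value; the standard Gaussian conditioning formula (mean $\Sigma_{21}\Sigma_{11}^{-1}y_{l_i}$, covariance the Schur complement of the block corresponding to $\uv$ in the $3\times 3$ Gram matrix) then shows that $(\Phim_{l_i}\vv,\Phim_{l_i}\wv)$ given $\{\Phim_{l_i}\uv=y_{l_i}\}$ is bivariate Gaussian with exactly the mean $\mathbf{\mu}'$ and covariance $\mathbf{\Sigma}'$ stated in the theorem; in particular the off-diagonal entry $\vv^T\wv-\frac{(\uv^T\vv)(\uv^T\wv)}{\Vert\uv\Vert^2}$ is the residual covariance of the two projections after removing their common component along $\uv$.

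Next, the $i$-th bits of $\qv$ and $\rv$ are $\mathrm{sign}(\Phim_{l_i}\vv)$ and $\mathrm{sign}(\Phim_{l_i}\wv)$, so they disagree precisely when these two values have opposite signs. Letting $(V,W)\sim\mathcal{N}(\mathbf{\mu}',\mathbf{\Sigma}')$ with CDF $F$, I would write $p_i=\Pb(V>0,W<0)+\Pb(V<0,W>0)$ and expand each quadrant probability by inclusion--exclusion into evaluations of $F$ with arguments $0$ or $\pm\infty$, obtaining the closed form for $p_i$. I would also remark that $p_i$ is even in $y_{l_i}$ — the map $(V,W)\mapsto(-V,-W)$, i.e.\ $\mathbf{\mu}'\mapsto-\mathbf{\mu}'$, fixes the opposite-sign event — consistently with the facts that only the relative sign of $q_i$ and $r_i$ matters and that the selection rule sees only $|\yv|$.

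Finally, with $E_i$ the mismatch indicator of coordinate $i$ as in~\eqref{eq:bit} and $\Pb(E_i=1)=p_i$, I would argue that the $E_i$ are independent across $i$: the component of each row $\Phim_j$ orthogonal to $\uv$ is independent of all the $\Phim_k\uv$, hence of $\lv$ and of $\{y_{l_i}\}$, so conditionally on $\lv$ and $\{y_{l_i}\}$ the pairs $(\Phim_{l_i}\vv,\Phim_{l_i}\wv)$ are independent across the selected rows with the law just derived. Hence $d_H(\qv,\rv)=\frac1m\sum_{i=1}^m E_i$ is once more a (normalized) Poisson binomial variable with mean $\mu=\frac1m\sum_i p_i$, and the two-sided Chernoff bound already invoked for Theorem~\ref{thm:expected} yields~\eqref{eq:concentration_thm4} with no further work. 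I expect the only delicate part to be the bookkeeping of the first two paragraphs — getting $\mathbf{\Sigma}'$ right and translating the opposite-sign event into the bivariate-CDF form of $p_i$; everything downstream of $p_i$ (independence across coordinates, Poisson binomial identification, Chernoff) transfers unchanged from Theorem~\ref{thm:expected}.
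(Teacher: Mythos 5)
Your proposal is correct and follows the paper's own proof essentially verbatim: the same zero-mean trivariate Gaussian with covariance $\sigma^2$ times the Gram matrix, the same conditioning on $y_{l_i}$ to obtain the bivariate $\mathcal{N}(\mathbf{\mu}',\mathbf{\Sigma}')$, the same identification of $p_i$ with the opposite-sign (quadrant) probability, and the same Poisson-binomial-plus-Chernoff conclusion, with your explicit independence-across-selected-rows argument being a useful addition that the paper leaves implicit. One caveat: your inclusion--exclusion yields $p_i=F\left(\begin{bmatrix}0\\ +\infty\end{bmatrix}\right)+F\left(\begin{bmatrix}+\infty\\ 0\end{bmatrix}\right)-2F\left(\begin{bmatrix}0\\ 0\end{bmatrix}\right)$, which agrees with the proof's line $p_i=\Pb(\eta_1<0,\eta_2>0)+\Pb(\eta_1>0,\eta_2<0)$ but \emph{not} with the product form printed in the theorem statement, where the bracketed joint probabilities are additionally multiplied by the marginals $\Pb(\eta_2>0)$ and $\Pb(\eta_2<0)$ as if they were conditionals --- so your expansion will not literally reproduce the displayed formula; the inconsistency is in the theorem statement rather than in your argument.
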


\begin{proof}
The proof is similar to the proof of Theorem \ref{thm:expected}. Let us consider a single measurement in the $l_i$ location $y_i=\Phim_{l_i} \uv$, $z_i=\Phim_{l_i} \vv$, $a_i=\Phim_{l_i} \wv $. Then $\mathbf{\zeta} = [z_i,a_i,y_i]$ is Gaussian with zero mean and covariance $\Sigma = \sigma^2 \begin{bmatrix}
\Vert \uv \Vert^2 & \uv^T\vv & \uv^T\wv \\ 
\uv^T\vv &  \Vert \vv \Vert^2 & \vv^T\wv \\
\uv^T\wv & \vv^T\wv &  \Vert \wv \Vert^2.
\end{bmatrix}$.
Suppose that $y_i$ is observed to be $y_i=\tau_i$, then the conditional distribution of $[z_i,a_i]$ given $y_i=\tau_i$ is $\mathbf{\eta} = ([z_i,a_i] \vert y_i=\tau_i) \sim \mathcal{N}\left( \mathbf{\mu}', \mathbf{\Sigma}' \right)$ with $\mathbf{\mu}' = \frac{y_{l_i}}{\Vert \uv \Vert^2} \begin{bmatrix}
\uv^T \vv\\ 
\uv^T \wv
\end{bmatrix}$ and covariance $\mathbf{\Sigma}' = \sigma^2 \begin{bmatrix}
\Vert \vv \Vert^2 - \frac{(\uv^T \vv)^2}{\Vert \uv \Vert^2}  & \vv^T \wv - \frac{(\uv^T \vv)(\uv^T \wv)}{\Vert \uv \Vert^2}  \\ 
\vv^T \wv - \frac{(\uv^T \vv)(\uv^T \wv)}{\Vert \uv \Vert^2}  & \Vert \wv \Vert^2 - \frac{(\uv^T \wv)^2}{\Vert \uv \Vert^2}
\end{bmatrix}$.

After quantization of the measurements, the probability of mismatching bits in position $l_i$ is 
\vspace{-4pt}
\begin{align*}
p_i &= \Pb\left( \eta_1 < 0 \vert \eta_2 > 0 \right) \Pb\left( \eta_2 > 0 \right)  +  \Pb\left( \eta_1 > 0 \vert \eta_2 < 0 \right)  \Pb\left(\eta_2 < 0 \right)
\vspace{-8pt}
\end{align*}

Define the random variable $E_i$ as in \eqref{eq:bit}, then $D_H = \frac{1}{m}\sum_{i=1}^m E_i$ is a Poisson Binomial random variable measuring the Hamming distance between $\qv$ and $\rv$. Eq. \eqref{eq:concentration_thm4} can be readily obtained using Chernoff bounds \cite{mitzenmacher2005probability} for the tails of $D_H$. 
\end{proof}

The key distinction between sign random projections \cite{Charikar2002} and the method presented in this paper is that the former provides a linear relationship between the Hamming distance in the embedded space and the angle in the original space. On the other hand, the binary adaptive embedding provides a nonlinear relationship between the two distances, with the important property that the Hamming distances observed with the adaptive embedding are always smaller than those observed with sign random projections. This property is at the core of the improved performance of the embedding for tasks such as binary classification, as discussed in Sec. \ref{sec:experimental}. Fig. \ref{fig:2party} shows the Hamming distance in the embedded space as function of the inner product between the test signal and the reference signal in the original space. It can be noticed how the curve of the adaptive embedding always lies below the one for the non-adaptive embedding. For a fixed value of $m_\text{pool}$ increasing the umber of measurements $m$ will reduce the variance and move the expected value towards that of the non-adaptive embedding. Viceversa, for a fixed $m$ increasing $m_\text{pool}$ will lower the curve. Fig. \ref{fig:3party} shows the Hamming distance between the first test signal and the second test signal in the embedded space as function of the inner product between the first test signal and the second test signal ($\vv^T\wv$) and between the second test signal and the reference signal ($\uv^T\wv$) in the original space. Notice how for decreasing $\uv^T\wv$ the shape of the embedding tends to the one of a non-adaptive embedding.

\begin{figure}
    \centering
    \vspace*{-0.3cm}
    \includegraphics[width=0.85\columnwidth]{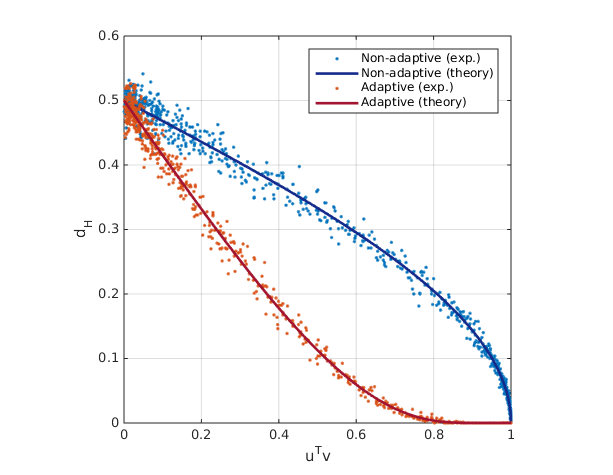}
    \vspace*{-0.5cm}
    \caption{Hamming distance in the embedding a function of inner product between reference and test signals. Unit norm signals, $m=800$, $m_\text{pool}=5000$. Non-adaptive curve uses sign random projections \cite{Charikar2002}.}
    \vspace*{-0.46cm}
    \label{fig:2party}
\end{figure}

\begin{figure}
    \centering
    \includegraphics[width=0.8\columnwidth]{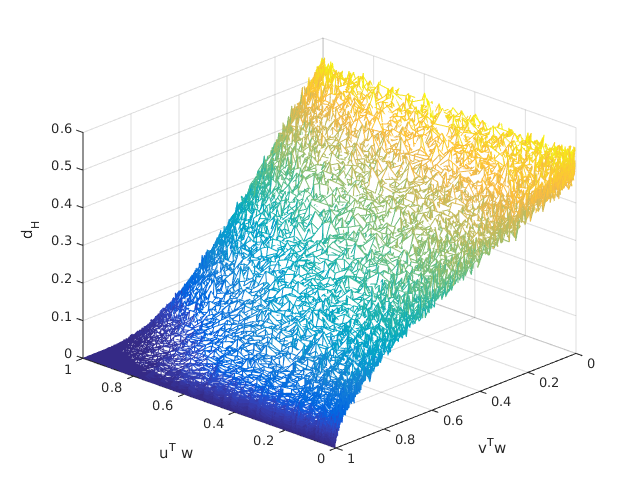}
    \vspace*{-0.75cm}
    \caption{Hamming distance in the embedding a function of inner product between reference and test signals and between both test signals. Unit norm signals, $m=800$, $m_\text{pool}=5000$.}
    \vspace*{-0.5cm}
    \label{fig:3party}
\end{figure}

\section{Applications and experiments}
\label{sec:experimental}

\subsection{Low-contrast approximate nearest neighbors}
A measure of difficulty \cite{NNcontrast, Beyer1999} of a nearest neighbor search problem is the contrast $r$, which is defined as the ratio between the distance of the closest false neighbor and that of the farthest true neighbor. Locality sensitive hashing \cite{AndoniLSH2008} is able to find approximate nearest neighbors in a time $\mathcal{O}(N^r)$ that is sublinear in the database size $N$ but that degenerates to linear search as the contrast approaches 1. The curse of dimensionality makes low contrast more probable when high-dimensional spaces are considered. The adaptive embedding presented in this paper can be used as a dimensionality reduction technique to solve approximate nearest neighbor search more efficiently in low-contrast scenarios.

In order to show this we develop an experiment in a high-dimensional space where the goal is to find the nearest neighbors of a given signal within a certain radius. True neighbors are generated as standard Gaussian vectors with $n=8192$ i.i.d. entries with an expected  correlation coefficient equal to 0.07 to a reference that is also used as query. Disturbing signals are i.i.d. Gaussian with zero expected correlation. Notice that they are almost orthogonal to each other but the contrast is low because the true neighbours are weakly correlated with our query. This problem is not unrealistic and, in fact, as an example, it occurs in the detection of photo-response non-uniformity artifacts from camera sensors \cite{Toothpic_TIFS}\cite{Toothpic_TMM}, used to attribute a given picture to a given camera sensor. In this experiment all the signals in the database are adaptively embedded, i.e., the locations of the $m$ entries of largest magnitude are identified and stored with the binary code. The random projections of the query are computed and appropriately subsampled according to the locations stored for each database signal under test. The storage requirement for each database entry is the sum of the bits needed by the binary measurements and the overhead due to the adaptively chosen locations and it amounts to $m + \log_2\left[ m_\text{pool} \choose m \right]$ bits.
Fig. \ref{fig:roc} shows the Receiver Operating Characteristic (ROC) showing the probability of detection of a true neighbor against the probability of false alarm. We notice that the adaptive embedding provides a performance closer to the uncompressed case. Two non-adaptive strategies are presented for a fair comparison. A non-adaptive method using the same storage as the adaptive method would use $m'=m + \log_2\left[ m_\text{pool} \choose m \right]$ binary random projections, with the drawback of increased computational complexity in the Hamming distance evaluation. The second strategy equalizes computational complexity, thus using $m'=m$ non-adaptive measurements. This is advantageous in terms of storage but it performs significantly worse. Finally, we compared the proposed method with the Universal Embedding of Boufounos et al. \cite{Boufounos2013} which is a kind of adaptive embedding where the quantizer can be parametrized in order to distort the expected value of signal distances, similarly to the embedding proposed in this paper. The Universal Embedding bounds the maximum distance that is embedded, beyond which points become indistinguishable. It is therefore expected to have poor performance in low-contrast, low-correlation scenarios, as it appears from Fig. \ref{fig:roc}., where the quantization step is $\Delta=2$.

\begin{figure}
    \centering
    \vspace*{-0.05cm}
    \includegraphics[width=0.6\columnwidth]{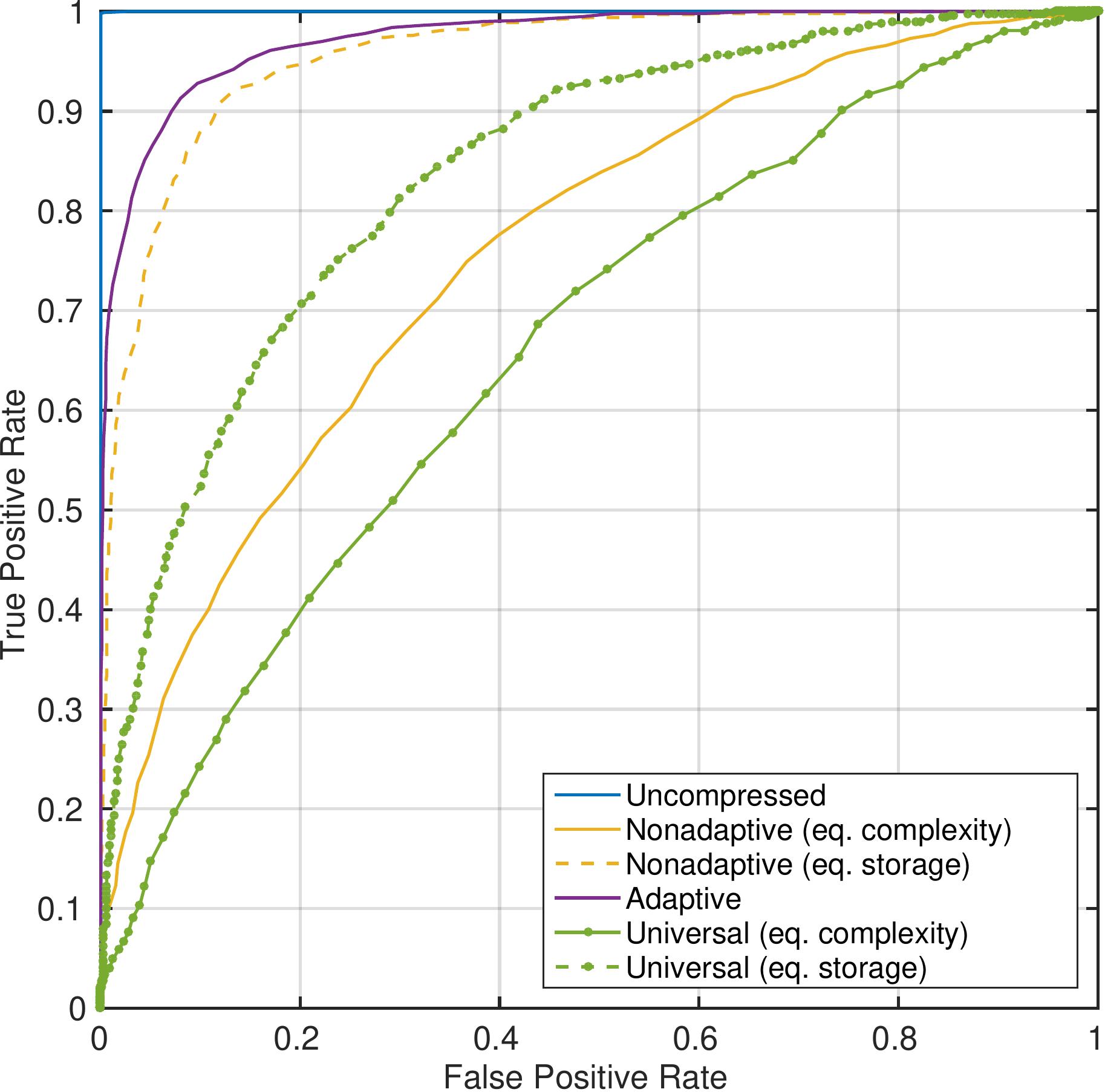}
    \vspace*{-0.2cm}
    \caption{ROC for nearest neighbor search. A point is declared a neighbor of the query if the Hamming distance is below a given threshold. $n=8192$, $m_\text{pool}=8192$, $m=512$. The query is used as reference signal.}
    \vspace*{-0.6cm}
    \label{fig:roc}
\end{figure}

\vspace*{-10pt}
\subsection{Multiclass linear classifiers}
\vspace*{-0.1cm}
In this section we apply the adaptive embedding to a multiclass linear classifier in order to improve its storage and computational efficiency. Linear classifiers are widely used in the context of deep neural networks, where the layers of the network are trained to disentangle the features of each class and a simple linear classification layer provides the class labels. A $k$-class linear classifier can be written as $l = \arg\max_{i=\lbrace 1,\dots, k \rbrace} \wv_i\xv$,
being $l$ the class label, $\wv_i$ the weights vectors and $\xv$ a feature vector.
The weights are learned during the training phase using a suitable loss function such as the hinge loss \cite{cortes1995support} for support vector machines or the softmax cross-entropy \cite{bishop2006pattern} for multinomial logistic regression which is more popular in deep neural networks.
Since the feature vectors may be high dimensional and the number of classes large, this operation may require significant storage space for the real-valued weights as well as computational resources to compute all the inner products. This can be overcome using an embedding such as sign random projections. After the training phase is completed, the weights are embedded in a compact binary code for each class. During predictions the feature vectors are also embedded, the Hamming distance with the weights is computed and the class label corresponding to the minimum distance is selected, i.e. $ l = \arg\min_{i=\lbrace 1,\dots, k \rbrace} d_H(\mathbf{\omega}_i, \yv)$, being $\mathbf{\omega}_i= \mathrm{sign}\left(\Phi \wv \right)$ and $\yv = \mathrm{sign}\left(\Phi \xv \right)$.
Replacing sign random projections with the proposed adaptive embedding can improve the classification performance of the compressed system. Overall, there are as many adaptive embeddings as the number of classes. The random projections of each $\wv_i$ are used to compute a different set of locations $\lv_i$ for each class. At test time, the feature vector is embedded $k$ times to generate $\yv_i$ (this amounts to generating $m_\text{pool}$ non-adaptive projections and then subsampling according to the corresponding $\lv_i$). Hence, the class label is given by $l = \arg\min_{i=\lbrace 1,\dots, k \rbrace} d_H(\mathbf{\omega}_i, \yv_i)$, being $\mathbf{\omega}_i= \mathrm{sign}\left(\Phi_{\lv_i} \wv \right)$ and $\yv_i = \mathrm{sign}\left(\Phi_{\lv_i} \xv \right)$.

The following experiment is a classification problem on the CIFAR-10 dataset \cite{krizhevsky2009learning} comprising 10 classes. We implemented the same convolutional neural network architecture presented in \cite{simonyan2014very}. This network is composed of 8 convolutional layers followed by 2 fully connected layers all with ReLU activation units \cite{nair2010rectified} and a final linear layer. The last linear layer outputs one of the $k=10$ class labels from a $n=1024$-dimensional input feature vector. After conventional training of the network, we replaced the layer weights with its embedded codes as explained above. For the adaptive method we used $m_\text{pool}=1024$. Table \ref{table:cnn} shows the classification accuracy as function of the number of measurements used by the embedding. It can be noticed that the adaptive embedding allows to achieve a significant dimensionality reduction at a negligible loss in terms of classification accuracy, with respect to both sign random projections \cite{Charikar2002} and the universal embedding \cite{Boufounos2013}. The quantization step size of the universal embedding has been optimized via cross validation to value $\Delta=180$.

\begin{table}
\centering
\caption{Classification accuracy}
\label{table:cnn}
\begin{tabular}{c|c|c|c|c}
\textbf{Method} & \multicolumn{4}{c}{\textbf{Accuracy}}\\
\hline
Uncompressed & \multicolumn{4}{c}{$\textbf{92.64 \%}$}\\
\hline
& $m=32$ & $m=64$ & $m=128$ & $m=256$ \\[0.05cm]
\textbf{Adaptive} & \textbf{92.30\%} & \textbf{92.33\%} & \textbf{92.40\%} & \textbf{92.49\%}\\[0.05cm]
Sign random projections & \multirow{2}{*}{75.49\%} & \multirow{2}{*}{87.03\%} & \multirow{2}{*}{91.09\%} & \multirow{2}{*}{92.27\%} \\
(eq. complexity) & & & & \\
Universal Embedding & \multirow{2}{*}{77.13\%} & \multirow{2}{*}{87.81\%} & \multirow{2}{*}{92.10\%} & \multirow{2}{*}{92.30\%} \\
(eq. complexity) & & & & \\
\hline
& $m=233$ & $m=405$ & $m=680$ & $m=1065$ \\
Sign random projections & \multirow{2}{*}{91.90\%} & \multirow{2}{*}{92.32\%} & \multirow{2}{*}{92.38\%} & \multirow{2}{*}{92.51\%} \\
(eq. storage) & & & &\\
Universal Embedding & \multirow{2}{*}{92.20\%} & \multirow{2}{*}{92.32\%} & \multirow{2}{*}{92.39\%} & \multirow{2}{*}{92.49\%} \\
(eq. storage) & & & & \\
\end{tabular}
\vspace*{-0.5cm}
\end{table}

\vspace*{-0.1cm}
\section{Conclusions}
\vspace*{-0.1cm}
This paper presented a technique to generate compact binary codes from high-dimensional signals adapting them to a reference signal. The resulting embedding displays interesting properties that allow to improve performance in classification tasks when those are performed in the reduced-dimensionality domain. Future work will focus on generalizing the approach to sub-Gaussian and structured sensing matrices.


\end{document}